\patchcmd{\@addpunct}{:}{\space}{}{}
\renewenvironment{proof}[1][\proofname]{\par
  \pushQED{\qed}%
  \normalfont \topsep6\p@\@plus6\p@\relax
  \trivlist
  \item[\hskip\labelsep\itshape #1.\@addpunct{\space}]\ignorespaces
}{%
  \popQED\endtrivlist\@endpefalse
}
\title{\LARGE \bf
Transfer in Sequential Multi-armed Bandits via Reward Samples 
}
\author{NR Rahul and Vaibhav Katewa
\thanks{NR Rahul is with the Department of Electrical  Communication Engineering (ECE) at the Indian Institute of Science, Bengaluru, India. Email: 
        {\tt\small rahulnr@iisc.ac.in}}%
\thanks{Vaibhav Katewa is with the Robert Bosch Center for Cyber-Physical Systems and the Department of ECE at the Indian Institute of Science, Bengaluru, India. Email:
        {\tt\small vkatewa@iisc.ac.in}}%
}
\newtheorem*{remark}{Remark}
\newtheorem{assumption}{Assumption}
\newtheorem{theorem}{Theorem}
\newtheorem{lemma}{Lemma}
\begin{document}

\maketitle
\thispagestyle{empty}
\pagestyle{empty}

\begin{abstract}

We consider a sequential stochastic multi-armed bandit problem where the agent interacts with bandit over multiple episodes. The reward distribution of the arms remain constant throughout an episode but can change over different episodes. 
We propose an algorithm based on UCB to transfer the reward samples from the previous episodes and improve the cumulative regret performance over all the episodes. We provide regret analysis and empirical results for our algorithm, which show significant improvement over the standard UCB algorithm without transfer.     

\end{abstract}

\section{INTRODUCTION}

The Multi-armed Bandit (MAB) problem \cite{bubeck2012regret,lattimore2020bandit,robbins1952some} is a popular sequential decision-making problem where an agent interacts with the environment by taking actions at every time step and in return gets a random reward. The goal of the agent is to maximize the average reward received. Recently, there has been a lot of interest in the application of the MAB problem in the context of online advertisements and recommender systems\cite{9185782, silva2022multi}. One of the problems highlighted in \cite{silva2022multi} is the user cold start problem, which is the inability of a recommender system to make a good recommendation for a new user in absence of any prior information. In this scenario, it is useful to transfer knowledge from other related users in order to make better initial recommendations to the new user. In the context of a MAB problem, transfer learning uses knowledge from one bandit problem in order to improve the performance of another related bandit problem \cite{lazaric2013sequential, shilton2017regret}. In particular, it helps to accelerate learning and make better decisions quickly.

In this paper, we consider a sequential stochastic MAB problem where the agent interacts with the environment sequentially in episodes (similar to \cite{lazaric2013sequential}), where different episodes are synonymous with different tasks or different bandit problems. The reward distributions of the arms remain constant throughout the episode but change over different episodes. This scenario is useful, for instance, in recommender systems where the reward distributions of recommended items change in order to capture the changing user preferences over time. The goal is to leverage the knowledge from previous episodes in order to improve the performance in the current episode, thereby leading to an overall performance improvement. Towards this, we use reward samples from previous episodes to make decisions in the current episode. Our algorithm is based on the UCB algorithm for bandits \cite{auer2002finite}.

\noindent \textbf{Related Work:} Transfer learning in the context of MAB has been studied in the framework of Multi-task learning \cite{lazaric2013sequential,zhang2017transfer,deshmukh2017multi,liu2018transferable} and Meta-learning \cite{cella2020meta,cella2021multi, azizi2023meta}. In Multi-task learning, the set of tasks are fixed and they are repeatedly encountered by the learning algorithm whereas in meta-learning, the algorithm learns to adapt to a new task after learning from a few tasks drawn from the same task distribution. For instance, the authors in \cite{lazaric2013sequential} consider a finite set of bandit problems which are encountered repeatedly over time. In contrast, we consider an infinite set of bandit problems but assume that the problems are ``similar" (we define the notion of ``similarity" later). The idea of transferring knowledge using samples is used in the SW-UCB algorithm in \cite{garivier2011upper}, but it suffers from the notion of negative transfer, where knowledge transfer can degrade the performance. In contrast, our algorithm facilitates knowledge transfer while guaranteeing that there is no negative transfer.


The main contributions of the paper are:\\
\noindent(i) We develop an algorithm based on UCB to transfer knowledge using the reward samples from the previous episodes in a sequential stochastic MAB setting. Our algorithm has a better performance compared to UCB with no transfer.\\
\noindent(ii) We provide the regret analysis for the proposed algorithm and our regret upper bound explicitly captures the performance improvement due to transfer.\\
(iii) We show via numerical simulations that our algorithm is able to effectively transfer knowledge from previous episodes. 

\noindent \textbf{Notations:} $\mathds{1}\{E\}$ denotes the indicator function whose value is $1$ if the event (condition) $E$ is true, and $0$ otherwise. $\emptyset$ denotes null set.   

\section{PRELIMINARIES AND PROBLEM STATEMENT}

We consider the Multi-Armed Bandit problem with $K$ arms and $J$ episodes. The length of each episode is $n$. Define $[K] \triangleq \{1,2,\cdots,K\}$ and $[J] \triangleq \{1,2,\cdots,J\}$. At any given integer time $t>0$, one among the $K$ arms is pulled and a random reward is received. Let $I_t \in [K]$ and $r_{I_t}$, denote the arm pulled at time $t$ and the corresponding random reward, respectively. We assume that $r_{I_t}\in [0,1]$ and the rewards are independent across time and across all arms. In any given episode, the distributions of the arms do not change. However, they are allowed to be different over different episodes.


Let $\mu_{k}^{j}$ be the mean reward of arm $k$ in episode $j$. Let $\mu^{j} \triangleq [\mu_1^{j},\mu_2^{j},\cdots, \mu_K^{j}]^T$ denote the vector containing the mean rewards of all arms for episode $j$. Further, let $k^{j}_{*} \in \mathcal{A}^j \triangleq \underset{k\in [K]}{\arg \max} \{\mu_{k}^{j}\}$ and
$\mu^{j}_{*} = \max\limits_{k\in [K]} \{\mu_{k}^{j}\}$ denote an optimal arm\footnote{There may be more than one optimal arms which have equal maximum mean rewards.} in episode $j$ and it's mean reward, respectively. Define $\Delta_{k}^{j} = \mu_{*}^{j}-\mu_{k}^{j}>0$ as the sub-optimality gap of arm $k \notin \mathcal{A}^j$ in episode $j$. Note that the mean rewards of the arms are unknown.

We assume that the episodes in the MAB problem are related in the sense that the mean rewards of the arms across episodes do not change considerably. We capture this by the following assumption.

\begin{assumption}\label{assump1}
    We assume that $||\mu^{j_1} - \mu^{j_2}||_{\infty} \leq \epsilon $ for any $j_1,j_2 \in [J]$, where the parameter $0< \epsilon \leq 1$ is assumed to be known.
\end{assumption} 
This assumption implies that for each arm, the mean rewards across all episodes do not differ by more than $\epsilon$. In applications like online advertising and recommender systems, the user preferences change over time only gradually, and therefore, the parameter $\epsilon$ can be used to capture this behaviour. 

Let $N_k^j(t)$ denote the number of pulls of arm $k$ in the time interval $[(j-1)n+1,t]$. Thus, $N_k^j(t)$ counts the number of times arm $k$ is pulled from the beginning of episode $j$ until time $t$. Note that for episode $j$, the allowable values of $t$ in $N_k^j(t)$ are $[(j-1)n+1,nj]$. Further, let $S_k(t)$ denote the number of pulls of arm $k$ in the time interval $[1,t]$. Thus, $S_k(t)$ counts the number of times arm $k$ is pulled from the beginning of episode $1$ until time $t$. For example, if $n=5$ and $j=2$, then $N_k^2(8)$ counts the number of times arm $k$ is pulled in time instants $6,7,$ and $8$. Further, $S_k(8)$ counts the number of times arm $k$ is pulled in the interval $[1,8]$.

The goal of the agent is to decide which arm to pull (what should be the value of $I_t$) at any given time $t$ based on the information $\{r_{I_1}, r_{I_2}, \cdots, r_{I_{t-1}}\}$ in order to maximize the average reward over all episodes. This is captured by the pseudo-regret $R_J$ of the MAB problem over $J$ episodes:

\begin{align} \label{eq_regret}
R_J &= \sum\limits_{j=1}^{J} \mathbb{E}\left[\sum\limits_{t=(j-1)n+1}^{jn}(r_{k_{*}^{j}}-r_{I_t})\right]\nonumber\\
&= \sum\limits_{j=1}^{J} \left(n\mu_{*}^{j}-\mathbb{E}\left[\sum\limits_{t=(j-1)n+1}^{jn}\mu^j_{I_t}\right]\right)\nonumber\\
&= \sum\limits_{j=1}^{J}\sum\limits_{k=1}^K \Delta_{k}^{j}\mathbb{E}[N_k^j (jn)],
\end{align}

\noindent where the last equality follows since $\sum_{k=1}^{K}N_k^j(jn) = n$ for any $j\in[J]$. Thus, the goal is to make decisions $\{I_t: 1 \leq t\leq nJ \}$ to minimize the regret in \eqref{eq_regret}.

In this paper, we exploit the relation among the mean rewards of arms in different episodes (c.f. Assumption \ref{assump1}) in order to minimize the regret $R_J$. This is achieved by reusing (transferring) reward samples from previous episodes to make decisions in the current episode. We describe the approach and the proposed algorithm in detail in the next section.

\section{ALL SAMPLE TRANSFER UCB (AST-UCB)}

Our approach of reusing samples from previous episodes builds on the standard UCB algorithm for bandits. In this section, we first describe the UCB algorithm and then our proposed algorithm, which we call All Sample Transfer UCB (AST-UCB).

\subsection{UCB Algorithm \cite{auer2002finite}}
Intuitively, the arm-pulling decisions should be made on the reward samples obtained from each arm. Since the mean rewards of the arms are unknown, the UCB algorithm computes their sample-average estimates and the corresponding confidence intervals. Then, based on the principle of optimism in the face of uncertainty, the upper (maximum) value in the confidence interval of each arm is treated as the optimistic mean reward of that arm. Then, the arm with the highest optimistic mean reward is pulled.

As time progresses and more reward samples are received, the estimates become better and the confidence intervals become smaller. Thus, the upper value in the confidence interval approaches the true mean. Eventually, the optimistic mean reward of the optimal arm becomes larger than all other sub-optimal arms, and thereafter, only the optimal arm is pulled. 

The standard UCB algorithm is used when the arm distributions are assumed to be the same at all times. However, in our setting, the distributions change over episodes. Therefore, one approach would be to implement UCB algorithm separately in each episode by using only the samples of that particular episode. In other words, the UCB algorithm is restarted at the beginning of every episode and it uses only the reward samples received during the current episode. We call this approach as No Transfer UCB (NT-UCB) algorithm. Next, we explain NT-UCB algorithm for episode $j$.

Let $\hat{\mu}_{1k}^{j}(t)$ denote the sample-average estimate of the mean reward of arm $k$ at time $t$, and is computed as:
{\begin{align}
\label{eq:estimate1}
\hat{\mu}_{1k}^{j}(t) = \:\:\frac{\sum\limits_{\tau = (j-1)n+1}^{t}r_{I_\tau}\mathds{1}\{I_\tau = k\}}{\max\{1,N_k^j(t)\}},
\end{align}}

\noindent where $N_k^j(t)$ denotes the number of times arm $k$ is pulled until time $t$ since the beginning of episode $j$. Next, we compute the optimistic mean reward corresponding to $\hat{\mu}_{1k}^{j}(t)$. For this, we require the following result.

\begin{lemma}\label{lemma1}
Let $\alpha>1$. For episode $j$, time $t\in [(j-1)n+1,jn]$ and  arm $k$, with probability at least $1-\frac{2}{(t-(j-1)n)^{\alpha}},$ the following equation is satisfied 
\begin{align}\label{eq:lemma1}
|\hat{\mu}_{1k}^{j}(t) - \mu_k^j| \leq p_{1k}^j(t)\triangleq \sqrt{\frac{\alpha\log{(t-(j-1)n)}}{2N_k^{j}(t)}}.
\end{align}
\end{lemma}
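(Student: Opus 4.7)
The plan is to apply the two-sided Hoeffding inequality to the empirical estimator $\hat{\mu}_{1k}^{j}(t)$ from \eqref{eq:estimate1}, with the confidence radius tuned so that the resulting tail probability matches $2/(t-(j-1)n)^{\alpha}$. Since the rewards from any given arm lie in $[0,1]$ and are i.i.d.\ throughout episode $j$ (the arm distributions are constant within an episode by the model assumption), Hoeffding is the natural tool.

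First, I would fix an arbitrary episode $j$, time $t \in [(j-1)n+1, jn]$, and arm $k$, and condition on $N_k^j(t) = m \geq 1$. Under this conditioning, $\hat{\mu}_{1k}^{j}(t)$ is the empirical mean of $m$ i.i.d.\ samples from a distribution on $[0,1]$ with mean $\mu_k^j$, so two-sided Hoeffding gives
\begin{align*}
\Pr\!\left\{|\hat{\mu}_{1k}^{j}(t) - \mu_k^j| \geq \varepsilon \mid N_k^j(t) = m \right\} \leq 2\exp(-2m\varepsilon^2)
\end{align*}
for every $\varepsilon > 0$. Next, I would set $\varepsilon = \sqrt{\alpha \log(t-(j-1)n)/(2m)}$, which is exactly $p_{1k}^j(t)$ evaluated at $N_k^j(t)=m$; this choice makes the exponent equal to $-\alpha\log(t-(j-1)n)$ and collapses the right-hand side to $2/(t-(j-1)n)^{\alpha}$. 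The complement of this event is precisely the inequality in \eqref{eq:lemma1}, and since the resulting bound is uniform in the conditioning value $m$, averaging over $N_k^j(t)$ recovers the unconditional statement of the lemma.

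The genuinely subtle point, which I expect to be the main obstacle, is that $N_k^j(t)$ is itself a random variable determined by the algorithm's past decisions, so some care is needed in interpreting the conditional Hoeffding step above. The standard resolution, as in the UCB analysis \cite{auer2002finite}, is to couple the rewards to an underlying i.i.d.\ sequence $X_{k,1}, X_{k,2}, \ldots$ for each arm, so that $\hat{\mu}_{1k}^{j}(t) = \frac{1}{N_k^j(t)}\sum_{i=1}^{N_k^j(t)} X_{k,i}$. Hoeffding then applies cleanly to each fixed-length prefix $\frac{1}{m}\sum_{i=1}^m X_{k,i}$, and the random count $N_k^j(t)$ is handled either by the conditioning argument above or by an explicit union bound over $m \in \{1,\ldots,t-(j-1)n\}$. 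Beyond this measurability bookkeeping, the lemma is essentially a single application of Hoeffding with the tuned radius.
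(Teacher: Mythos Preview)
Your proposal is correct and takes essentially the same approach as the paper: apply two-sided Hoeffding to $\hat{\mu}_{1k}^{j}(t)$ and tune the radius to $\sqrt{\alpha\log(t-(j-1)n)/(2N_k^j(t))}$ so the tail becomes $2/(t-(j-1)n)^{\alpha}$. The paper's proof is terser and does not discuss the random-stopping-time subtlety you flag; your conditioning/coupling remark is a welcome bit of extra rigor, but the core argument is identical.
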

\begin{proof}
The rewards are independent random variables with support $[0,1]$. Using Hoeffding's inequality\cite{hoeffding1994probability} for estimate $\hat{\mu}_{1k}^{j}(t)$, we get 
\begin{align*}
\text{Pr}\{|\hat{\mu}_{1k}^{j}(t)-\mu_k^j| \geq \delta \} \leq 2\exp(-2N_k^{j}(t){\delta}^2).
\end{align*}
Setting $\delta = \sqrt{\frac{\alpha\log(t-(j-1)n)}{2N_k^{j}(t)}}$ for $N_k^{j}(t)\geq1$, the lemma follows.
\end{proof}

Using Lemma \ref{lemma1}, we form a confidence interval for mean reward $\mu_k^j$ using the estimate $\hat{\mu}_{1k}^{j}(t)$ at time $t$ in episode $j$ as
\begin{align*}
D_1^j(t) = [\hat{\mu}_{1k}^{j}(t)-p_{1k}^j(t), \hat{\mu}_{1k}^{j}(t)+p_{1k}^j(t)].
\end{align*}

Next, the NT-UCB algorithm pulls the arm with maximum optimistic reward:
{\begin{align*}
I_t = \underset{k\in [K]}{\arg \max}\left\{\hat{\mu}_{1k}^{j}(t-1)+p_{1k}^{j}(t-1)\right\}.
\end{align*}}
The above steps are repeated until the end of episode $j$. Next, we provide an upper bound on the pseudo-regret of the NT-UCB algorithm.

\begin{lemma}
\label{lemma:ucbRegret}
Let $\alpha>1$. The pseudo-regret of NT-UCB satisfies 
{\begin{align}
\label{eq:ucbRegret}
R_J &\leq \sum\limits_{k=1}^{K}\left[2\alpha \log{n}\bigg(\sum\limits_{{\substack{k=1 \\ \Delta_k^j > 0}}}^{J}\frac{1}{\Delta_k^j}\bigg)+J_k\frac{\alpha+1}{\alpha-1}\right],\\
&\text{where} \quad J_k=  \sum\limits_{k=1}^{J} \Delta_k^j\nonumber
\end{align}}
\end{lemma}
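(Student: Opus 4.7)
The plan is to begin from the pseudo-regret decomposition in \eqref{eq_regret}, which reduces the lemma to bounding $\mathbb{E}[N_k^j(jn)]$ for every sub-optimal arm $k$ in every episode $j$. Since NT-UCB restarts at the start of each episode and uses only the reward samples drawn inside that episode, episode $j$ is an independent instance of UCB1 over horizon $n$. I would therefore derive a per-episode, per-arm bound on $\mathbb{E}[N_k^j(jn)]$ and then aggregate over $k$ and $j$.

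For a fixed episode $j$ and arm $k \notin \mathcal{A}^j$, I would use the classical three-event decomposition. Whenever $I_t = k$ at a time $t\in[(j-1)n+2, jn]$, the UCB index of $k$ dominates that of any fixed optimal arm $k_*^j$, so at least one of the following must hold:
\begin{align*}
(a)\;\;& \hat{\mu}_{1k_*^j}^{j}(t-1) + p_{1k_*^j}^{j}(t-1) < \mu_*^j,\\
(b)\;\;& \hat{\mu}_{1k}^{j}(t-1) - p_{1k}^{j}(t-1) > \mu_k^j,\\
(c)\;\;& 2\, p_{1k}^{j}(t-1) \geq \Delta_k^j.
\end{align*}
Choosing the deterministic threshold $u_k^j = \lceil 2\alpha \log n / (\Delta_k^j)^2 \rceil$ rules out event $(c)$ as soon as $N_k^j(t-1) \geq u_k^j$. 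Consequently,
\[
N_k^j(jn) \;\leq\; u_k^j \;+\; \sum_{t=(j-1)n+1}^{jn} \mathds{1}\bigl\{(a)\text{ or }(b)\text{ occurs at time }t\bigr\}.
\]

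Taking expectations and applying Lemma \ref{lemma1} bounds each of $\Pr\{(a)\}$ and $\Pr\{(b)\}$ by a term of order $(t-(j-1)n)^{-\alpha}$. Using the standard UCB1 trick of summing over the joint values of $N_{k}^j(t-1)$ and $N_{k_*^j}^j(t-1)$ (rather than a naive union bound over $t$) produces a convergent tail bounded by $(\alpha+1)/(\alpha-1)$, giving
\[
\mathbb{E}[N_k^j(jn)] \;\leq\; \frac{2\alpha \log n}{(\Delta_k^j)^2} + \frac{\alpha+1}{\alpha-1}.
\]
Multiplying by $\Delta_k^j$, summing over $j \in [J]$ (arms with $\Delta_k^j = 0$ contribute nothing) and then over $k \in [K]$, and identifying $J_k = \sum_{j=1}^J \Delta_k^j$, yields exactly the claimed bound after swapping the order of summation.

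The main obstacle is the bookkeeping that delivers the precise constant $(\alpha+1)/(\alpha-1)$: a crude pointwise application of Lemma \ref{lemma1} followed by a sum over $t$ only produces a $\zeta(\alpha)$-type tail. Recovering the sharper constant requires the finer double-summation over all pairs of pull counts in the standard UCB1 analysis, which is the only non-mechanical step in the argument.
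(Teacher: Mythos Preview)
Your high-level plan matches the paper exactly: obtain a per-episode UCB bound on $\mathbb{E}[N_k^j(jn)]$ and then sum over $k$ and $j$. The paper's proof is in fact a one-line citation of the per-episode bound
\[
R_j \;\leq\; \sum_{\substack{k=1\\ \Delta_k^j>0}}^{K}\Bigl(\tfrac{2\alpha\log n}{\Delta_k^j}+\tfrac{\alpha+1}{\alpha-1}\,\Delta_k^j\Bigr),
\]
whereas you rederive this bound from scratch via the three-event decomposition. That is fine and arguably more self-contained.

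There is, however, a misconception in your last paragraph. The constant $(\alpha+1)/(\alpha-1)$ does \emph{not} come from the Auer-style double summation over pull-count pairs; in fact that route yields a tail of order $\sum_{t'} (t')^{2-\alpha}$, which only converges for $\alpha>3$ and would not give the lemma for $\alpha\in(1,3]$. The paper obtains the constant precisely by the ``crude'' route you dismiss: once Lemma~\ref{lemma1} gives $\Pr\{(a)\}+\Pr\{(b)\}\le 2(t-(j-1)n)^{-\alpha}$, one sums directly over $t$ and bounds by an integral,
\[
1+\sum_{t'=K}^{n}\frac{2}{(t')^{\alpha}}\;\le\;1+\int_{K-1}^{\infty}\frac{2}{s^{\alpha}}\,ds\;\le\;1+\frac{2}{\alpha-1}\;=\;\frac{\alpha+1}{\alpha-1}.
\]
This is exactly the computation carried out in the paper's proof of Theorem~\ref{theorem1} (there with four bad events instead of two, giving $(\alpha+3)/(\alpha-1)$). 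So your derivation is correct up to and including the three-event split; just replace the double-sum step by the direct integral bound above.
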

\begin{proof}
    An upper bound on the regret over all episodes is obtained by adding the per-episode regret bound of the standard UCB algorithm, which is given as \cite{bubeck2012regret}\footnote{The second term in \eqref{eq:ucbRegret} differs from the corresponding term mentioned in \cite{bubeck2012regret}, since additional union bounds are used to obtain the result in \cite{bubeck2012regret}.}
    {\begin{align*}
    R_j \leq \sum\limits_{{\substack{k=1 \\ \Delta_k^j > 0}}}^{K} \left(\frac{2\alpha\log{n}}{\Delta_k^j}+\frac{\alpha+1}{\alpha-1}\Delta_k^j\right).
    \end{align*}}
    The result then follows.
\end{proof}


\subsection{AST-UCB Algorithm}
For any particular episode, the NT-UCB algorithm mentioned above uses samples only in that episode to compute the estimates. However, as per Assumption \ref{assump1}, the mean rewards across the episodes are related, and therefore, reward samples in previous episodes carry information about the mean reward in the current episode. In order to capture this information, we construct an auxiliary estimate (in addition to the UCB estimate) that uses the reward samples from the beginning of the first episode. Then, we combine these two estimates to make the decisions. Next, we describe this approach for episode $j$.


Let $\hat{\mu}_{2k}(t)$ denote the auxiliary sample-average estimate of the mean reward of arm $k$ at time $t$, computed as:
{\begin{align}
\label{eq:estimate2}
\hat{\mu}_{2k}(t) = \:\frac{\sum\limits_{\tau = 1}^{t}r_{I_\tau}\mathds{1}\{I_\tau = k\}}{\max\{1,S_k(t)\}},
\end{align}}

\noindent where $S_k(t)$ denotes the number of times arm $k$ is pulled
until time $t$ since the beginning of episode $1$. 
Note that estimate $\hat{\mu}_{2k}(t)$ captures the information of reward samples of arm $k$ from all previous episodes \footnote{An alternate strategy would be to construct the auxiliary estimate from a fixed number of previous episodes. However, our strategy is better since the confidence interval corresponding to estimate \eqref{eq:estimate2} is always better than this alternate strategy.}.
Next, we compute the optimistic mean reward corresponding to $\hat{\mu}_{2k}(t)$. For this, we require the following result.
\begin{lemma}\label{lemma3}
Let $\alpha>1$. For episode $j$, time $t\in [(j-1)n+1,jn]$ and  arm $k$, with probability at least $1-\frac{2}{(t-(j-1)n)^{\alpha}},$ the following equation is satisfied 
\begin{align}\label{eq:lemma3}
|\hat{\mu}_{2k}(t) &- \mu_k^j| \leq p_{2k}^j(t) \triangleq \sqrt{\frac{\alpha\log{(t-(j-1)n})}{2S_k(t)}}+U_k^j(t)\epsilon, \\
&\text{where}\quad U_k^{j}(t) = \frac{S_k(t)-N_k^{j}(t)}{S_k(t)}\nonumber.
\end{align}
\end{lemma}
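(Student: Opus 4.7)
The plan is to mirror the proof of Lemma \ref{lemma1}, adding one step to handle the fact that $\hat{\mu}_{2k}(t)$ pools reward samples drawn from potentially different per-episode distributions. The key observation is that $\hat{\mu}_{2k}(t)$ concentrates not around $\mu_k^j$ directly, but around a weighted average of $\mu_k^1,\ldots,\mu_k^j$; by Assumption \ref{assump1}, the gap between this weighted average and $\mu_k^j$ will turn out to be exactly $U_k^j(t)\,\epsilon$.

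Concretely, I would introduce the auxiliary target
$$\tilde{\mu}_k(t) \triangleq \frac{1}{S_k(t)}\!\left[\sum_{i=1}^{j-1} N_k^i(in)\,\mu_k^i + N_k^j(t)\,\mu_k^j\right],$$
which is the conditional mean of $\hat{\mu}_{2k}(t)$ given the pull history, since within each episode $i$ the rewards of arm $k$ are i.i.d.\ with mean $\mu_k^i$. The $S_k(t)$ rewards entering the average are independent and bounded in $[0,1]$, so applying Hoeffding's inequality exactly as in Lemma \ref{lemma1} (with deviation $\delta=\sqrt{\alpha\log(t-(j-1)n)/(2S_k(t))}$) gives
$$\Pr\{|\hat{\mu}_{2k}(t)-\tilde{\mu}_k(t)|\geq\delta\}\leq \frac{2}{(t-(j-1)n)^\alpha}.$$
Using $\sum_{i=1}^{j-1}N_k^i(in)+N_k^j(t)=S_k(t)$, the bias decomposes as a convex combination of the deviations $\mu_k^i-\mu_k^j$ over past episodes only (the $i=j$ contribution vanishes), and Assumption \ref{assump1} then bounds it by
$$|\tilde{\mu}_k(t)-\mu_k^j|\leq \frac{S_k(t)-N_k^j(t)}{S_k(t)}\,\epsilon = U_k^j(t)\,\epsilon.$$
A single triangle inequality $|\hat{\mu}_{2k}(t)-\mu_k^j|\leq|\hat{\mu}_{2k}(t)-\tilde{\mu}_k(t)|+|\tilde{\mu}_k(t)-\mu_k^j|$ assembles the two bounds into $p_{2k}^j(t)$ with the stated failure probability.

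The main conceptual step, relative to Lemma \ref{lemma1}, is recognizing that the concentration target must shift from $\mu_k^j$ to the mixed mean $\tilde{\mu}_k(t)$; once this is done, the correction $U_k^j(t)\,\epsilon$ emerges automatically from Assumption \ref{assump1} together with the vanishing of the $i=j$ term in the decomposition. The random sample count $S_k(t)$ entering the Hoeffding bound is handled in the same implicit way as the random $N_k^j(t)$ in Lemma \ref{lemma1}, namely, by reading the bound pointwise in the realized count value and absorbing the dependence into the conditioning on the pull history.
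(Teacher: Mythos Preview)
Your proposal is correct and follows essentially the same route as the paper: the paper likewise targets the weighted per-episode mean $\mathbb{E}[\hat{\mu}_{2k}(t)]$ (your $\tilde{\mu}_k(t)$), bounds the bias by $U_k^j(t)\epsilon$ via Assumption~\ref{assump1}, and combines with the concentration bound through a triangle inequality. The only cosmetic difference is that the paper invokes McDiarmid's inequality rather than Hoeffding's, but for averages of independent $[0,1]$-valued rewards the two give the identical bound used here.
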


\begin{proof}
The rewards are independent random variables with support $[0,1]$. Using McDiarmid's inequality\cite{mcdiarmid1989method} for estimate $\hat{\mu}_{2k}(t)$, we get 
\begin{align*}
\text{Pr}\{\lvert \hat{\mu}_{2k}(t)-\mathbb{E}[\hat{\mu}_{2k}(t)]\rvert \geq \delta \} \leq 2\exp(-2S_k(t){\delta}^2).
\end{align*}
Setting $\delta = \sqrt{\frac{\alpha\log((t-(j-1)n))}{2S_k(t)}}$ for $S_k(t)\geq1$, we get  
\begin{align*}
&\text{Pr}\left\{|\hat{\mu}_{2k}(t)-\mathbb{E}[\hat{\mu}_{2k}(t)]| \geq \sqrt{\frac{\alpha\log(t-(j-1)n)}{2S_k(t)}}\right\}\\
&\leq \frac{2}{(t-(j-1)n)^\alpha}.
\end{align*}
Hence, with probability at least $1-\frac{2}{(t-(j-1)n)^{\alpha}}$, the following holds
\begin{align}
\label{eq:lemma3_conf}
&|\hat{\mu}_{2k}(t)-\mathbb{E}[\hat{\mu}_{2k}(t)]| \leq \sqrt{\frac{\alpha\log(t-(j-1)n)}{2S_k(t)}}.
\end{align}
Next, we bound $\mathbb{E}[\hat{\mu}_{2k}(t)]$ for $S_k(t) \geq 1$, $t\in[(j-1)n+1,jn]$:
\begin{align}
\label{eq:expect_bound1}
\mathbb{E}[\hat{\mu}_{2k}(t)]&= \frac{\sum\limits_{l=1}^{j-1}N_k^l(ln)\mu_k^l+N_k^j(t)\mu_k^j}{S_k(t)},\nonumber\\
&=\mu_k^j+\frac{\sum\limits_{l=1}^{j-1}N_k^l(ln)(\mu_k^l-\mu_k^j)}{S_k(t)},\nonumber\\
&\leq \mu_k^j+\frac{(S_k^j(t)-N_k^j(t))\epsilon}{S_k(t)},\nonumber\\
&=\mu_k^j+U_k^j(t)\epsilon,
\end{align}
where the inequality follows from $\mu_k^l-\mu_k^j\leq\epsilon$ (Asssumption \ref{assump1}). Similarly, using $\mu_k^l-\mu_k^j\geq-\epsilon$ (Asssumption \ref{assump1}), we get 
\begin{align}
\label{eq:expect_bound2}
\mathbb{E}[\hat{\mu}_{2k}(t)]\geq\mu_k^j-U_k^j(t)\epsilon.
\end{align}
Conditions \eqref{eq:expect_bound1} and \eqref{eq:expect_bound2} yield $|\mathbb{E}[\hat{\mu}_{2k}(t)]|\leq\mu_k^j+U_k^j(t)\epsilon$. Using this in \eqref{eq:lemma3_conf}, we get the result in \eqref{eq:lemma3}.
\end{proof}
Using Lemma \ref{lemma3}, we form a confidence interval for mean reward $\mu_k^j$ using the estimate $\hat{\mu}_{2k}(t)$ at time step $t$ in episode $j$ as
\begin{align*}
D_2^j(t) = [\hat{\mu}_{2k}(t)-p_{2k}^j(t), \hat{\mu}_{2k}(t)+p_{2k}^j(t)].
\end{align*}

Next, we present two key steps of the AST-UCB algorithm.

(i) Combine the optimistic rewards of the two estimates $\hat{\mu}_{1k}^{j}(t)$ and $\hat{\mu}_{2k}(t)$ given in \eqref{eq:estimate1} and \eqref{eq:estimate2} as:
\begin{align}
\label{eq:astUcb_opRewrd}
q_k^j(t) = \min \{\hat{\mu}_{1k}^j(t)+p_{1k}^j(t),\hat{\mu}_{2k}(t)+p_{2k}^j(t) \}.
\end{align}

(ii) Pull arm
\begin{align*} 
I_t = \underset{k\in [K]}{\arg \max}\{q_k^j(t-1)\}.
\end{align*}
The above steps are repeated until the end of episode $j$. All the steps of AST-UCB are given below in Algorithm \ref{alg1}.

\begin{algorithm}
\caption{AST-UCB}\label{alg1}
\begin{algorithmic}
\Require Episode length $n$, Number of episodes $J$, Parameters $\alpha$, $\epsilon$ and Number of arms $K$
\For {episode $j = 1,2,...,J$}
    \For {$t = (j-1)n+1,\cdots,(j-1)n+K$}
        \State $I_t = t-(j-1)n$ (Pull each arm once)
    \EndFor
    \For {$t = (j-1)n+K+1,\cdots,jn$}
        \State compute $\hat{\mu}_{1k}^j(t-1)$, $p_{1k}^j(t-1)$ using \eqref{eq:estimate1},\eqref{eq:lemma1}
        \State compute $\hat{\mu}_{2k}(t-1)$, $p_{2k}^j(t-1)$ using \eqref{eq:estimate2},\eqref{eq:lemma3}
        \State compute optimistic reward $q_k^j(t-1)$ using \eqref{eq:astUcb_opRewrd}
        \State select arm $I_t = \underset{k\in [K]}{\arg \max} \{q_k^j(t-1)\}$
        \State update  number of pulls $N_k^j(t)$ and $S_k(t)$
    \EndFor
\EndFor
\end{algorithmic}
\end{algorithm}

Next, we explain the motivation for Step (i). We combine the confidence intervals $D_{1k}^{j}(t)$ and $D_{2k}^{j}(t)$ by taking their intersection to get a better confidence interval. Note that by taking the intersection, the new confidence interval $D_{1k}^{j}(t) \cap D_{2k}^{j}(t)$ is always smaller than the original two confidence intervals, as illustrated in Figure \ref{fig:Figure1}. This smaller interval results in a better estimate of $\mu_k^j$. We then pick the optimistic reward in the new confidence interval\footnote{Note that the Step (i) is valid even when $D_{1k}^{j}(t)$ and $D_{2k}^{j}(t)$ do not intersect.}. Further, Step (ii) is similar to the UCB algorithm where we pull the arm with the maximum optimistic reward. The next result presents a bound on the probability of $\mu_k^j$ lying in the new confidence interval (the new confidence interval being non-empty).

\begin{figure}[h]
\includegraphics[width = \columnwidth]{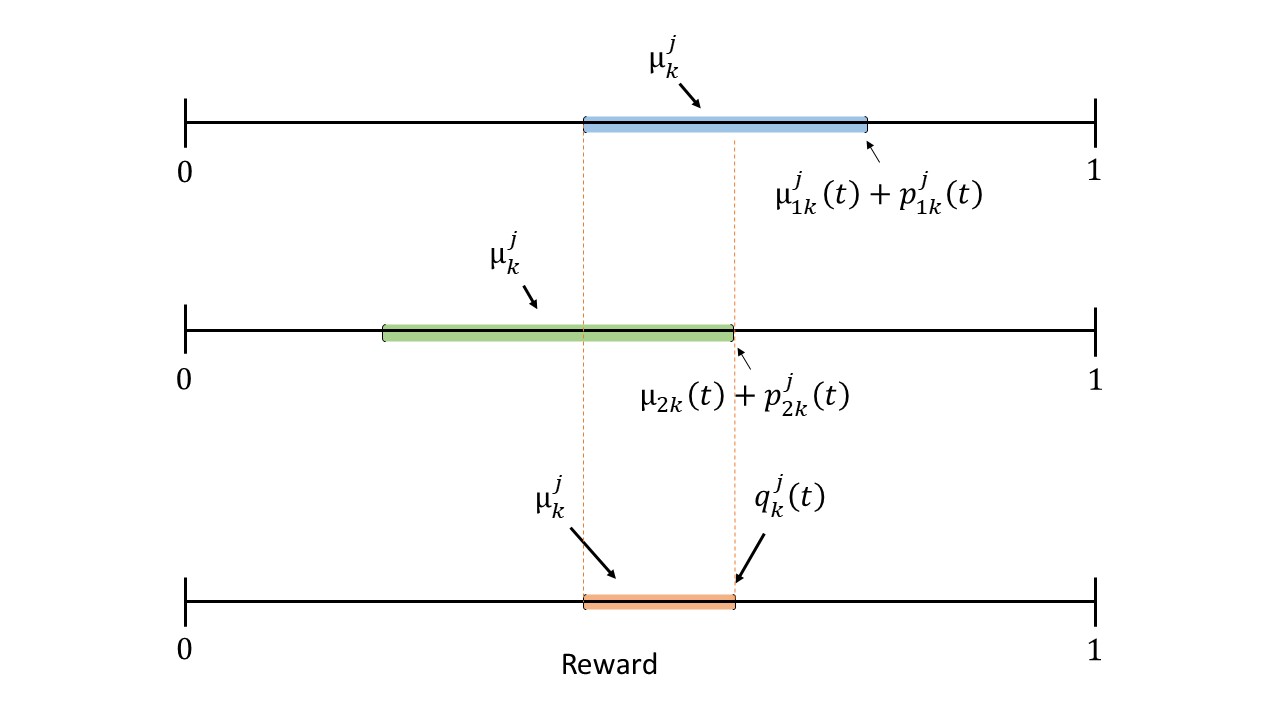}
\caption{The blue and green intervals represent confidence intervals $D_{1k}^{j}(t)$ and $D_{2k}^{j}(t)$ for mean $\mu_k^j$, respectively. The orange interval is the intersection of the two intervals, which is clearly smaller (and hence better). The optimistic reward of the orange interval is given by $q_{k}^j(t)$.}
\label{fig:Figure1}
\end{figure}

\begin{lemma}
\label{lemma4}
For episode $j$, time $t\in [(j-1)n+1,jn]$ and  arm $k$, with probability at least $1-\frac{4}{(t-(j-1)n)^{\alpha}},$ the following equations are satisfied 
{\begin{align}\label{eq:lemma4}
&(i)\quad \mu_k^j \in D_{1k}^{j}(t) \cap D_{2k}^{j}(t).\\
&(ii)\quad D_{1k}^{j}(t) \cap D_{2k}^{j}(t) = \emptyset.
\end{align}}
\end{lemma}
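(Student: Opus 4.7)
The plan is to obtain both statements simultaneously by applying Lemmas \ref{lemma1} and \ref{lemma3} and combining their high-probability guarantees via a union bound. Specifically, Lemma \ref{lemma1} gives $\mu_k^j \in D_{1k}^{j}(t)$ with probability at least $1-\tfrac{2}{(t-(j-1)n)^{\alpha}}$, and Lemma \ref{lemma3} gives $\mu_k^j \in D_{2k}^{j}(t)$ with the same probability bound. For fixed $j$, $t$, and $k$, let $\mathcal{E}_i = \{\mu_k^j \notin D_{ik}^{j}(t)\}$ for $i\in\{1,2\}$. Then
\begin{align*}
\mathrm{Pr}\{\mu_k^j \notin D_{1k}^{j}(t) \cap D_{2k}^{j}(t)\} = \mathrm{Pr}\{\mathcal{E}_1 \cup \mathcal{E}_2\} \leq \mathrm{Pr}\{\mathcal{E}_1\} + \mathrm{Pr}\{\mathcal{E}_2\} \leq \frac{4}{(t-(j-1)n)^{\alpha}},
\end{align*}
which proves (i).

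For part (ii) — which I read as the complementary event $D_{1k}^{j}(t) \cap D_{2k}^{j}(t) \neq \emptyset$, since the statement as literally typeset (claiming with high probability that the intersection is empty) would contradict (i) — the proof is immediate. On the same event of probability at least $1-\tfrac{4}{(t-(j-1)n)^{\alpha}}$ where (i) holds, the intersection contains the point $\mu_k^j$ and hence is non-empty. Thus both statements follow from the same union-bound event, without any additional probabilistic argument.

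There is no real obstacle here: the proof is a textbook union bound, and the non-emptiness assertion is a trivial consequence of the containment assertion. The only care required is to ensure that the two Hoeffding/McDiarmid tail events used in Lemmas \ref{lemma1} and \ref{lemma3} are defined for the same fixed triple $(j,t,k)$ so that the union bound applies directly, and to note that the bounds $\tfrac{2}{(t-(j-1)n)^{\alpha}}$ in each lemma are already expressed in terms of $t-(j-1)n$ (rather than $t$), so no re-indexing is needed before adding them.
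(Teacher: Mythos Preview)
Your proof is correct and follows essentially the same route as the paper's: define $\mathcal{E}_1,\mathcal{E}_2$ as the failure events of Lemmas~\ref{lemma1} and~\ref{lemma3}, apply the union bound to get the $\tfrac{4}{(t-(j-1)n)^{\alpha}}$ probability bound for (i), and note that (ii) follows on the same event. Your reading of part~(ii) as $D_{1k}^{j}(t)\cap D_{2k}^{j}(t)\neq\emptyset$ is the intended one, and your derivation of it from (i) is if anything cleaner than the paper's ``same arguments are valid'' remark.
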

\begin{proof}
Define events
$\mathcal{E}_1 = \{\mu_k^j \notin D_{1k}^{j}(t)\}$ and $\mathcal{E}_2 = \{\mu_k^j \notin D_{1k}^{j}(t)\}$.
Then we have
{\begin{align*}
\text{Pr}\{\mu_k^j \notin D_{1k}^{j}(t) \cap D_{2k}^{j}(t)\}
&= \text{Pr}\{\mathcal{E}_1 \cup \mathcal{E}_2\}\\
&\leq \text{Pr}\{\mathcal{E}_1\} + \text{Pr}\{\mathcal{E}_2\}\\
&\leq\frac{4}{(t-(j-1)n)^{\alpha}},
\end{align*}}
where the last inequality follows from Lemmas \ref{lemma1} and \ref{lemma3}. Hence, condition (i) in the lemma follows. Same arguments are valid for condition (ii) as well.
\end{proof}
Note that although the new confidence interval is smaller, Lemma \ref{lemma4} shows that the probability bound of the mean reward belonging to this new interval has reduced as compared to that in \eqref{eq:lemma1} or \eqref{eq:lemma3}. However, we show in Theorem \ref{theorem1} that the negative effect of the reduction of the probability is not significant, and the smaller interval leads to an overall reduction in the regret.

\section{REGRET ANALYSIS}
In this section, we derive the regret of the AST-UCB algorithm and then provide the analysis of the result.
\begin{theorem}\label{theorem1}
Let $\Delta_k^{max} \triangleq \max\limits_{j\in [J]} \: \{\Delta_k^j\} $ and $\Delta_k^{min} \triangleq \min\limits_{j\in [J],\Delta_{k}^{j}> 0} \: \{\Delta_k^j\}$. The pseudo-regret of AST-UCB with $\alpha>1$  and $0\leq\epsilon< \frac{1}{2}\min\limits_{k\in [K]}\{\Delta_k^{min}\}$ satisfies

\begin{align}
\label{eq:astUcb_regret}
R_J&\leq \sum\limits_{k=1}^{K} \Delta_k^{max}\Bigg[\min\bigg\{\sum\limits_{\substack{j=1 \\ \Delta_{k}^{j}> 0}}^{J}\frac{2\alpha\log{(n)}}{(\Delta_k^{j})^2},\frac{2\alpha\log{(n)}}{(\Delta_k^{\min}-2\epsilon)^2}\bigg\}\nonumber\\
&\hspace{3cm}+J\frac{\alpha+3}{\alpha-1} \Bigg].
\end{align}
\end{theorem}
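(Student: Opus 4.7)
The plan is to adapt the standard UCB regret analysis, exploiting the fact that the optimistic reward $q_k^j(t)$ from \eqref{eq:astUcb_opRewrd} is the \emph{minimum} of two optimistic estimates. For each sub-optimal arm $k$, I would bound the total number of pulls $S_k(Jn) = \sum_{j=1}^J N_k^j(jn)$ across all episodes, and then combine this with the inequality $\sum_{j=1}^J \Delta_k^j\,\mathbb{E}[N_k^j(jn)] \leq \Delta_k^{\max}\,\mathbb{E}[S_k(Jn)]$ to conclude via the regret decomposition \eqref{eq_regret}.

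Fix a sub-optimal arm $k$ and a time $t$ in episode $j$, and suppose $I_t = k$. By the pulling rule, $q_k^j(t-1) \geq q_{k_*^j}^j(t-1)$. Let the good event $\mathcal{G}_{t-1}$ be the event that the four confidence intervals $D_{1i}^j(t-1), D_{2i}^j(t-1)$ for $i \in \{k, k_*^j\}$ all contain their respective means. On $\mathcal{G}_{t-1}$, Lemma~\ref{lemma4} applied to $k_*^j$ gives $q_{k_*^j}^j(t-1) \geq \mu_{*}^j$, and therefore \emph{both} terms inside the min defining $q_k^j(t-1)$ must be at least $\mu_{*}^j$. This is the key consequence of Step~(i) in AST-UCB: a single pull of $k$ on $\mathcal{G}_{t-1}$ yields two simultaneous upper-bound conditions, and I may use whichever produces the tighter bound.

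From the first condition, combined with the lower-tail bound $\hat{\mu}_{1k}^j(t-1) \leq \mu_k^j + p_{1k}^j(t-1)$ on $\mathcal{G}_{t-1}$, the standard UCB manipulation yields $2 p_{1k}^j(t-1) \geq \Delta_k^j$, hence $N_k^j(t-1) \leq \frac{2\alpha\log n}{(\Delta_k^j)^2}$. From the second condition, analogous manipulations give $2 p_{2k}^j(t-1) \geq \Delta_k^j$, and since $U_k^j(t-1) \leq 1$ and $\Delta_k^j \geq \Delta_k^{\min}$, this yields $\sqrt{\alpha \log n /(2 S_k(t-1))} \geq \Delta_k^{\min}/2 - \epsilon$, a quantity strictly positive under the hypothesis $\epsilon < \Delta_k^{\min}/2$, and therefore $S_k(t-1) \leq \frac{2\alpha\log n}{(\Delta_k^{\min} - 2\epsilon)^2}$. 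Summing the per-episode $N_k^j$-bound gives one bound for $S_k(Jn)$, the $S_k$-bound gives the other, and I take the minimum.

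The remaining task is to pay for the bad-event contribution. Using the standard template $\mathbb{E}[S_k(Jn)] \leq u + \sum_{j,t} \Pr(\mathcal{G}_{t-1}^c)$ with $u$ equal to the min of the two bounds above, the failure probabilities from Lemma~\ref{lemma4} summed over each episode contribute a constant of order $1/(\alpha-1)$, producing the $J\tfrac{\alpha+3}{\alpha-1}$ term per arm; morally, this is the UCB constant $\tfrac{\alpha+1}{\alpha-1}$ from the first confidence interval plus an additional $\tfrac{2}{\alpha-1}$ charged by the auxiliary confidence interval. Multiplying the per-arm bound by $\Delta_k^{\max}$ and summing over $k$ then yields \eqref{eq:astUcb_regret}. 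The main obstacle I anticipate is arranging the union bound over the four simultaneous failure events cleanly enough that the constant comes out at $\tfrac{\alpha+3}{\alpha-1}$ rather than a naive multiple thereof, which will likely require isolating the initialization pulls and applying $\sum_{s\geq 2} s^{-\alpha} \leq 1/(\alpha-1)$ in the spirit of the derivation in \cite{bubeck2012regret}.
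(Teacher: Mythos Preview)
Your proposal is correct and follows essentially the same route as the paper: the paper's five-condition contradiction argument (conditions \eqref{eq:cond1}--\eqref{eq:cond5}) is exactly your good-event analysis in contrapositive form, and the paper obtains the $\min$ in \eqref{eq:astUcb_regret} via the same indicator identity $\mathds{1}\{A\wedge B\}=\min\{\mathds{1}\{A\},\mathds{1}\{B\}\}$ followed by $\sum\min\le\min\{\sum,\sum\}$. Your anticipated obstacle about the constant is resolved in the paper precisely as you suggest: one-sided Hoeffding bounds on the four relevant deviations (rather than the two-sided Lemmas~\ref{lemma1} and~\ref{lemma3}) give $4/(t-t_j^n)^{\alpha}$, which after isolating the per-episode initialization pull and integrating yields $1+4/(\alpha-1)=(\alpha+3)/(\alpha-1)$.
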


\begin{proof}
Refer to the appendix.
\end{proof}

Next, we compare the regret bounds of our algorithm \eqref{eq:astUcb_regret} and NT-UCB \eqref{eq:ucbRegret}, and highlight the benefit of transfer. The transfer happens due to the first term in \eqref{eq:astUcb_regret}. Hence, we compare the first terms in the regret bounds. 
To this end, we define the following terms that capture the dependence on $J$:

\begin{align*}
A_k^J = \sum\limits_{\substack{j=1 \\ \Delta_{k}^{j}> 0}}^{J}\frac{\Delta_k^{max}}{(\Delta_k^{j})^2},
B_k^J = \frac{\Delta_k^{max}}{(\Delta_k^{\min}-2\epsilon)^2},
C_k^J = \sum\limits_{\substack{j=1 \\ \Delta_{k}^{j}> 0}}^{J}\frac{1}{\Delta_k^{j}}.
\end{align*}

Several comments are in order. First, observe that, for transfer to be beneficial, we need $\min\{A_k^J,B_k^j\}<C_k^J$. Since $A_k^J\geq C_k^J$, this can happen only if $B_k^J<C_k^J$ . The term $B_k^J$ behaves like a constant as compared to $C_k^J$ which increases as the total number of episodes $J$ increases. Therefore, for some large enough $J^{m}(\epsilon)$, we get $C_k^{J^{m}(\epsilon)} < B_k^{J^{m}(\epsilon)}$ which leads to decrease in the regret as compared to NT-UCB. Second, as $\epsilon$ increases (episodes become increasingly non-related), $J^m(\epsilon)$ increases since more episodes (samples) are required for the transfer to be beneficial. Third, we have logarithmic dependence of episode length $n$ on the regret (which is the case with NT-UCB as well). Fourth, the second term in the regret bound of AST-UCB \eqref{eq:astUcb_regret} is higher than the corresponding term in NT-UCB bound \eqref{eq:ucbRegret} due to the decreased probability bound in Lemma \ref{lemma4} as compared to Lemmas \ref{lemma1} and \ref{lemma3}.

\section{Numerical Simulations}
\begin{figure}[thpb]
\begin{subfigure}{\columnwidth}
    \centering
    \includegraphics[width=\columnwidth]{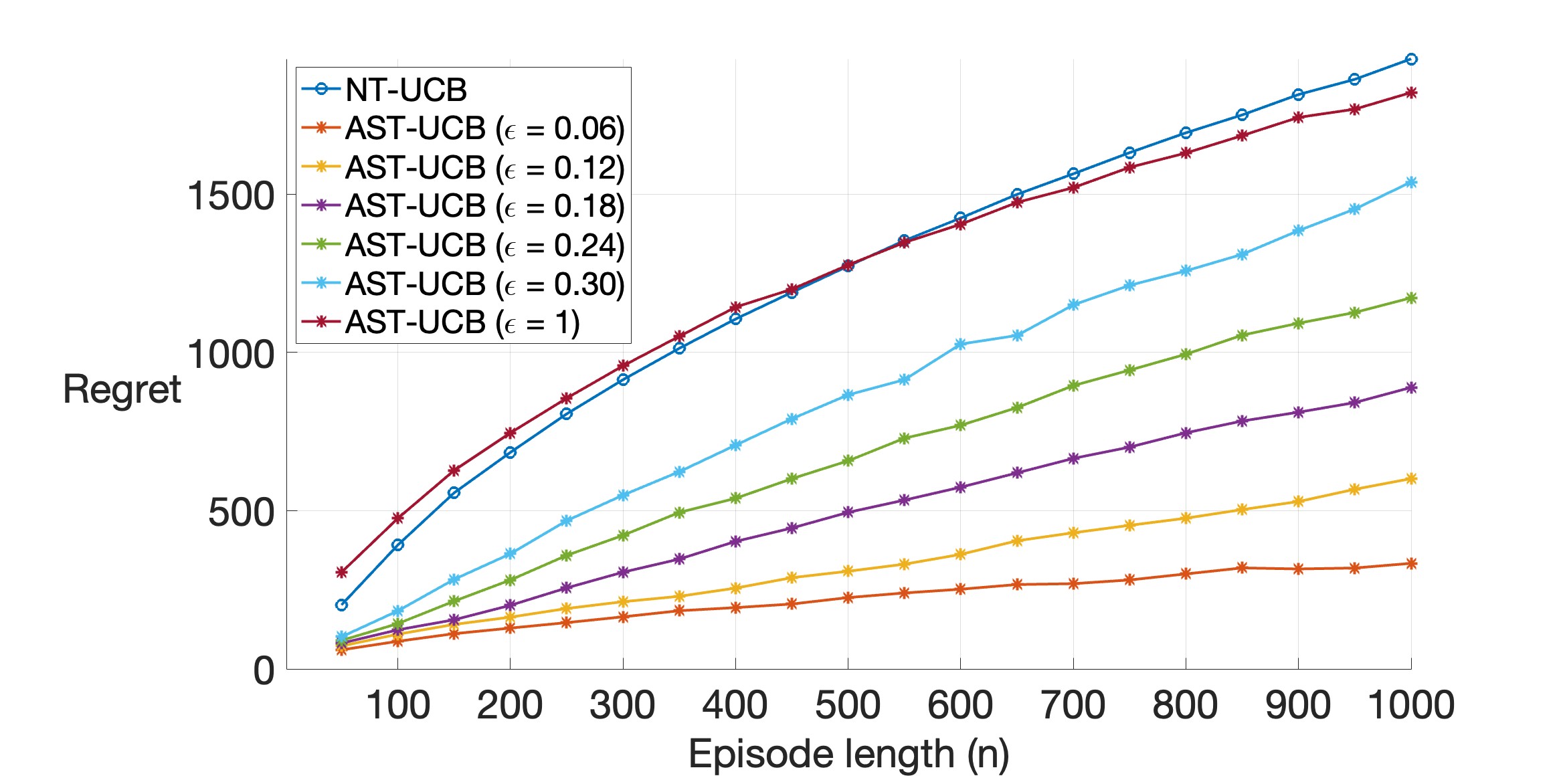}
    \caption{Regret as function of episode length $n$.}
    \label{fig:example_1a}
    \end{subfigure}
\begin{subfigure}{\columnwidth}
    \centering
    \includegraphics[width=\columnwidth]{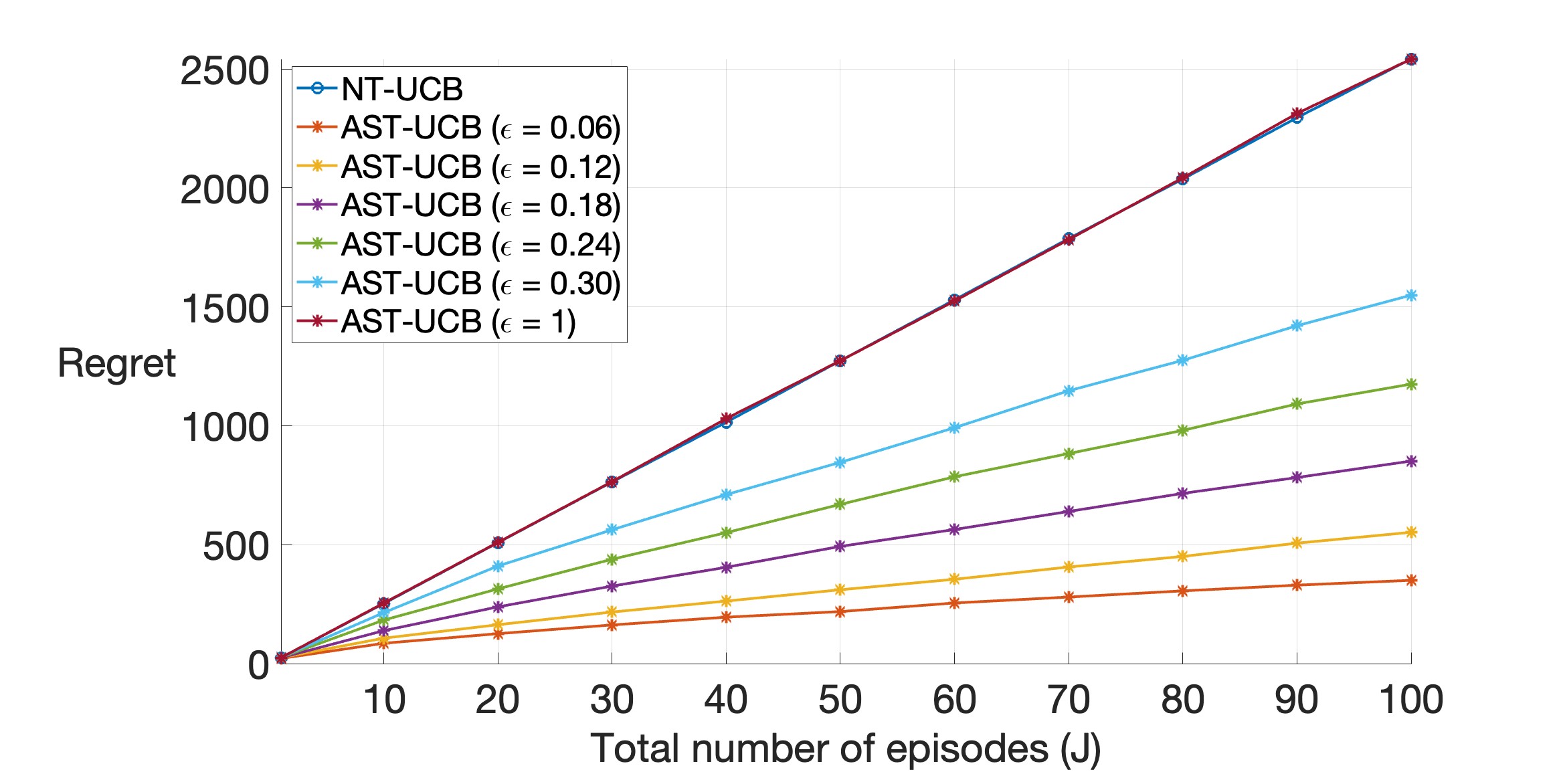}
    \caption{Regret as function of total number of episodes $J$.}
    \label{fig:example_1b}
\end{subfigure}
\caption{Empirical regret of  NT-UCB and AST-UCB for different values of $\epsilon$ for Case I.}
\label{fig:example_1}
\end{figure}

\begin{figure}[thpb]
\begin{subfigure}{\columnwidth}
    \centering
    \includegraphics[width=\columnwidth]{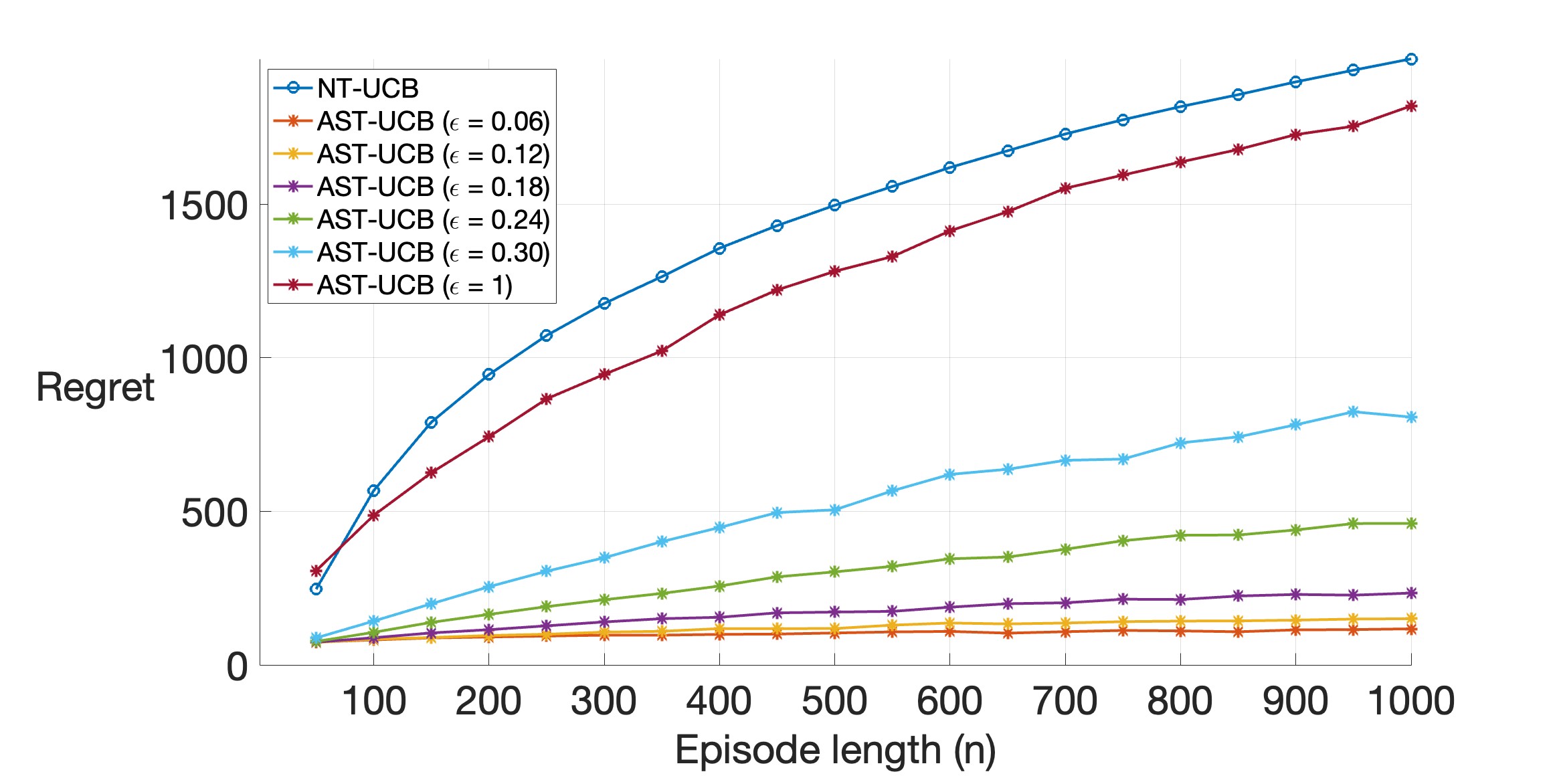}
    \caption{Regret as function of episode length $n$.}
    \label{fig:example_2a}
\end{subfigure}
\begin{subfigure}{\columnwidth}
    \centering
    \includegraphics[width=\columnwidth]{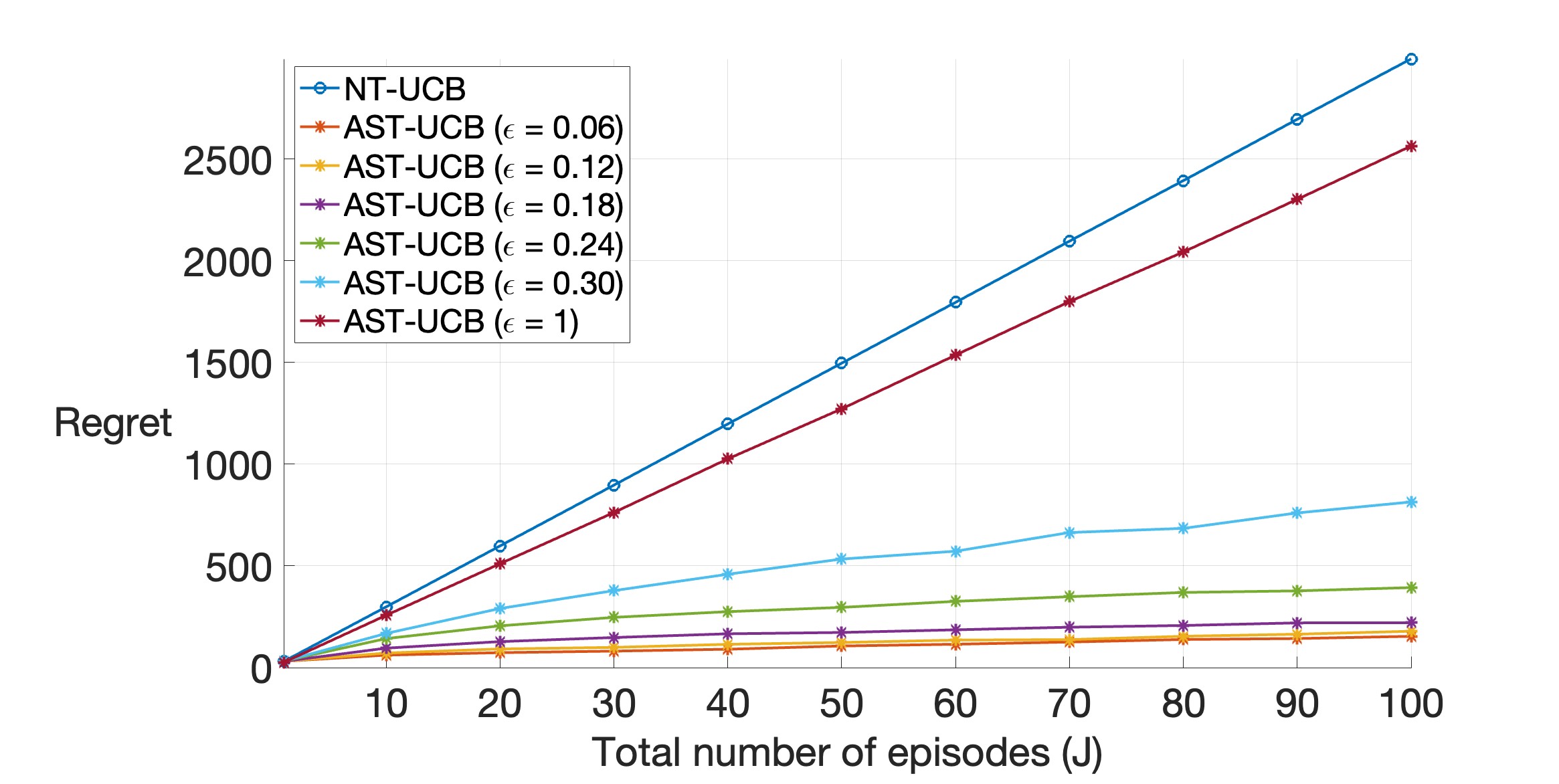}
    \caption{Regret as function of total number of episodes $J$.}
    \label{fig:example_2b}
\end{subfigure}
\caption{Empirical regret of NT-UCB and AST-UCB for different values of $\epsilon$ for Case II.}
\label{fig:example_2}
\end{figure}

In this section, we present the numerical results for AST-UCB algorithm. We consider $K = 4$ armed bandit problem. In numerical simulations, we need to select the mean reward ($\mu_k^j$) of each arm for each episode which should satisfy Assumption \ref{assump1}. Towards this end, we fix a seed interval of length $\epsilon$ for each arm. Then, at the beginning of each episode, we uniformly sample the value of $\mu_k^j$ from this seed interval. This ensures that Assumption \ref{assump1} is satisfied.
Once the mean reward value $\mu_k^j$ is obtained, we construct a uniform distribution with mean $\mu_k^j$ and width $d = 0.2$. In case the support of this uniform distribution lies outside the interval $[0,1]$, we reduce $d$ to avoid this. The reward samples are then generated from the uniform distribution. For each scenario, we compute the regret $R_J$ by taking an empirical average over $30$ independent realizations of that scenario.  

We simulate AST-UCB and NT-UCB for two cases (two sets of seed intervals). Note that the seed intervals for each arm are of length $\epsilon$. The mid-points of the seed intervals of the four arms for Case I and Case II are $(0.4,0.6,0.6,0.4)$ and $(0.35,0.7,0.3,0.4)$, respectively. 

In Figure \ref{fig:example_1a}, we observe that the regret of AST-UCB is considerably smaller as compared to NT-UCB. This is particularly true for smaller values of $\epsilon$. As $\epsilon$ increases\footnote{Since the reward support is $[0,1]$, values of $\epsilon>1$ are not valid in our setting.}, the regret of AST-UCB approaches to that of NT-UCB. This is in accordance with the fact that when $\epsilon$ is large, the confidence interval of the auxiliary estimate in \eqref{eq:lemma3} is large and transfer is not much beneficial. Further, we observe a logarithmic dependence of the regret on $n$, as quantified by the regret bounds in \eqref{eq:ucbRegret} and \eqref{eq:astUcb_regret}.

In Figure \ref{fig:example_1b}, we again observe that AST-UCB performs better than NT-UCB, particularly for small values of $\epsilon$. We also observe that the regret has a ``approximate" linear dependence on $J$. The plots in Figures \ref{fig:example_1} show that for any value of $\epsilon$ the difference between the regret of NT-UCB and AST-UCB increases with episode length ($n$) or total number of episodes ($J$). This is because a larger number of reward samples from previous episodes become available, thereby increasing the transfer. 

Similar observations can be seen in Figures \ref{fig:example_2a} and \ref{fig:example_2b} for Case II. However, the improvement of AST-UCB over NT-UCB in terms of regret is more in Case II as compared to Case I. This happens because the seed intervals in Case II are farther apart, which helps in distinguishing the best arm more quickly using the samples of previous episodes.

\section{CONCLUSION}
We analyzed the transfer of reward samples in a sequential stochastic multi-armed bandit setting. We proposed a transfer algorithm based on UCB and showed that its regret is lower than UCB with no transfer. We provide regret analysis of our algorithm and validate our approach via numerical experiments. Future research directions include extending the work to the case when the parameter $\epsilon$ is unknown, and studying a similar transfer problem in the context of reinforcement learning.
 



\section*{APPENDIX: Proof of Theorem \ref{theorem1}}
To simplify the notation, we re-denote several variables as $\mu = \mu_k^j$, $\mu_* = \mu_*^j$, $\hat{\mu}_{1} = \hat{\mu}_{1k}^{j}(t-1)$, $\hat{\mu}_{1*} = \hat{\mu}_{1k_{*}^{j}}^{j}(t-1)$, $\hat{\mu}_{2} = \hat{\mu}_{2k}(t-1)$, $\hat{\mu}_{2*} = \hat{\mu}_{2k_{*}^{j}}(t-1)$, $t_j^n = (j-1)n+1$,
\begin{align*}
&p_{1} = \sqrt{\frac{\alpha\log{(t-t_j^n)}}{2N_k^{j}(t-1)}}, p_{1*} = \sqrt{\frac{\alpha\log{(t-t_j^n)}}{2N_{k_*^j}^{j}(t-1)}},\\
&p_{2} = \sqrt{\frac{\alpha\log{(t-t_j^n)}}{2S_k(t-1)}}+U_k^j(t)\epsilon,\\ 
&p_{2*} = \sqrt{\frac{\alpha\log{(t-t_j^n)}}{2S_{k_*^j}(t-1)}}+U_{k_*}^j(t)\epsilon,\\
&u_{1k}^{j} = \frac{2\alpha\log{(n)}}{(\Delta_k^{j})^2}, u_{2k}^{j} = \frac{2\alpha\log{(n)}}{(\Delta_k^{j}-2\epsilon)^2}.
\end{align*}
For arm $k$ to be pulled at time $t$ ($I_t = k$), at least one of the following five conditions should be true:
\begin{align}
\hat{\mu}_{1} - p_1 &> \mu, \label{eq:cond1} \\
\hat{\mu}_{1*} + p_{1*} &\leq \mu_{*},
\label{eq:cond2}\\
\hat{\mu}_{2*} + p_{2*} &\leq \mu_{*}, \label{eq:cond3}\\
\hat{\mu}_{2} - p_2 & > \mu, \label{eq:cond4}
\end{align}
\begin{align}
\sqrt{\frac{\alpha\log{n}}{2N_k^{j}(t-1)}} & > \frac{\Delta_k^j}{2}\hspace{0.4cm}\text{and}
\hspace{0.4cm}\sqrt{\frac{\alpha\log{(n)}}{2S_k(t-1)}}+\epsilon >  \frac{\Delta_k^j}{2}. \label{eq:cond5}
\end{align}

We show this by contradiction. Assume that none of the conditions in \eqref{eq:cond1}-\eqref{eq:cond4} is true and the first condition in \eqref{eq:cond5} is false. Then, using the fact that $p_1< \sqrt{\frac{\alpha\log{n}}{2N_k^{j}(t-1)}} $, we have
{\begin{align} 
\hat{\mu}_{1*} + p_{1*} &> \mu_{*} =\Delta_k^j + \mu
\geq2p_{1}+\mu \geq\hat{\mu}_{1} + p_{1}, \label{eq:falseCond1} \\
\hat{\mu}_{2*} + p_{2*}&> \mu_{*} 
=\Delta_k^j + \mu \geq2p_{1}+\mu
\geq\hat{\mu}_{1} + p_{1}. \label{eq:falseCond2}
\end{align}}
Conditions in \eqref{eq:falseCond1} and \eqref{eq:falseCond2} imply
{\begin{align} 
\label{eq:falseMainCond1}
\min\{\hat{\mu}_{1*} + p_{1*},\hat{\mu}_{2*} + p_{2*}\}
>\hat{\mu}_{1} + p_{1}.
\end{align}}
Similarly, when none of the conditions in \eqref{eq:cond1}-\eqref{eq:cond4} is true and the second condition in \eqref{eq:cond5} is false, we get
{\begin{align} 
\label{eq:falseMainCond2}
\min\{\hat{\mu}_{1*} + p_{1*},\hat{\mu}_{2*} + p_{2*}\}
>\hat{\mu}_{2} + p_{2}.
\end{align}}
Thus, at least one of the conditions in \eqref{eq:falseMainCond1} and \eqref{eq:falseMainCond2} is true, and this yields
{\begin{align*}
\min\{\hat{\mu}_{1*} + p_{1*},&\hat{\mu}_{2*} + p_{2*}\}
>\min\{\hat{\mu}_{1} + p_{1},\hat{\mu}_{2} + p_{2}\}.
\end{align*}}
The above condition implies that the AST-UCB algorithm will not pull arm $k$, and hence, we have a contradiction. The cumulative regret after $J$ episodes (each with length $n$) is given by
{\begin{align*}
R_J &= \sum\limits_{j=1}^{J}\sum\limits_{k=1}^K \Delta_{k}^{j}\mathbb{E}[N_k^j (jn)],\\
&\leq \sum\limits_{k=1}^{K}\Delta_k^{\max}\mathbb{E}[\tilde{S}_k (Jn)],
\end{align*}}
where $\tilde{S}_k (Jn)$ is the total number of sub-optimal pulls to arm $k$ over all episodes.
Next, we bound the regret by bounding the term $\mathbb{E}[\tilde{S}_k (Jn)]$. For an arbitrary sequence $I_t$, $t=1,2,\cdots,Jn$, we have 
{\begin{align}
\label{eq:S_k}
\tilde{S}_k(Jn)&=\sum\limits_{j=1}^{J}\sum\limits_{t=t_j^n}^{jn}\mathds{1}\{ I_t = k;k\neq k_{*}^j\},\nonumber\\
&=\sum\limits_{j=1}^{J}\bigg(\mathds{1}\{k\neq k_{*}^j\}+\sum\limits_{t=t_j^n+K}^{jn}\mathds{1}\{ I_t = k;k\neq k_{*}^j\}\bigg),\nonumber\\
&=\sum\limits_{j=1}^{J}\sum\limits_{t=t_j^n+K}^{jn}\mathds{1}\{ I_t = k;k\neq k_{*}^j;\eqref{eq:cond5}\hspace{0.1cm} \text{is}\hspace{0.1cm} \text{True}\}\nonumber\\
&\hspace{0.5cm}+\sum\limits_{j=1}^{J}\bigg(\mathds{1}\{k\neq k_{*}^j\}+\sum\limits_{t=t_j^n+K}^{jn}\mathds{1}\{ I_t = k;k\neq k_{*}^j;\nonumber\\
&\hspace{1.4cm}\eqref{eq:cond5}\hspace{0.1cm} \text{is}\hspace{0.1cm} \text{False}\}\bigg).
\end{align}}
\vspace{-0.5cm}
{\begin{align*}
\label{eq:S_k_first}
&\text{First term in \eqref{eq:S_k}} = \sum\limits_{j=1}^{J}\sum\limits_{t=t_j^n+K}^{jn} \mathds{1}\{ I_t = k, k \neq k_*^{j};\\
&\hspace{3cm}N_k^{j}(t-1)<u_{1k}^{j},S_k(t-1)<u_{2k}^{j}\},\\
&= \sum\limits_{j=1}^{J}\sum\limits_{t=t_j^n+K}^{jn} \min\bigg\{\mathds{1}\{ I_t = k, k \neq k_*^{t};N_k^{j}(t-1)<u_{1k}^{j}\},\\
&\hspace{1cm}\mathds{1}\{ I_t = k, k \neq k_*^{t};
S_k(t-1)<u_{2k}^{j}\}\bigg\},
\end{align*}}
{\begin{align}
&\leq \min\bigg\{\sum\limits_{j=1}^{J}\sum\limits_{t=t_j^n+K}^{jn} \mathds{1}\{ I_t = k, k \neq k_*^{t};N_k^{j}(t-1)<u_{1k}^{j}\},\nonumber\\
&\hspace{1cm}\sum\limits_{j=1}^{J}\sum\limits_{t=t_j^n+K}^{jn}\mathds{1}\{ I_t = k, k \neq k_*^{t};S_k(t-1)<u_{2k}^{j}\}\bigg\},\nonumber\\
&\leq\min\bigg\{\sum\limits_{\substack{j=1 \\
\Delta_{k}^{j}> 0}}^{J}\frac{2\alpha\log{(n)}}{(\Delta_k^{j})^2},\sum\limits_{t=1}^{Jn}\mathds{1}\bigg\{ I_t = k, k \neq k_*^{t};\nonumber\\
&\hspace{1cm}S_k(t-1)<\frac{2\alpha\log{(n)}}{(\Delta_k^{\min}-2\epsilon)^2}\bigg\}\bigg\},\nonumber\\
&\leq\min\bigg\{\sum\limits_{\substack{j=1 \\
\Delta_{k}^{j}> 0}}^{J}\frac{2\alpha\log{(n)}}{(\Delta_k^{j})^2},\frac{2\alpha\log{(n)}}{(\Delta_k^{\min}-2\epsilon)^2}\bigg\},
\end{align}}
\vspace{-0.5cm}
{\begin{align}
\label{eq:S_k_second}
&\text{Second term of \eqref{eq:S_k}}\leq \sum\limits_{j=1}^{J}\bigg(1+\sum\limits_{t=t_j^n+K}^{jn} \mathds{1}\{\eqref{eq:cond1}\hspace{0.1cm} \text{or}\hspace{0.1cm} \eqref{eq:cond2}\nonumber\\
&\hspace{4cm}\text{or}\hspace{0.1cm}\eqref{eq:cond3}\hspace{0.1cm}\text{or}\hspace{0.1cm} \eqref{eq:cond4}\hspace{0.1cm} \text{is}\hspace{0.1cm} \text{True}\}\bigg).
\end{align}}
Using \eqref{eq:S_k}, \eqref{eq:S_k_first}, \eqref{eq:S_k_second} and taking expectation, we get
{\begin{align}
\label{eq:S_k_f+S}
\mathbb{E}[S_k (n)]&\leq \min\left\{W_J,V_J\right\}+ \sum\limits_{j=1}^{J}\bigg(1+\sum\limits_{t=t_j^n+K}^{jn} \text{Pr}\{\eqref{eq:cond1}\hspace{0.1cm}\text{or}\nonumber\\
&\hspace{1cm} \eqref{eq:cond2}\hspace{0.1cm} \text{or}\hspace{0.1cm} \eqref{eq:cond3}\hspace{0.1cm} \text{or}\hspace{0.1cm} \eqref{eq:cond4}\hspace{0.1cm} \text{is}\hspace{0.1cm} \text{True}\}\bigg),\\
\nonumber
\text{where}\hspace{0.1cm}W_J &= \sum\limits_{\substack{j=1 \\ \Delta_{k}^{j}> 0}}^{J}\frac{2\alpha\log{(n)}}{(\Delta_k^{j})^2}\hspace{0.1cm}\text{and}\hspace{0.1cm}V_J = \frac{2\alpha\log{(n)}}{(\Delta_k^{\min}-2\epsilon)^2}.
\end{align}}

Next, we bound the probability of the event that at least one of \eqref{eq:cond1} or \eqref{eq:cond2} or \eqref{eq:cond3} or \eqref{eq:cond4} is true. We use the union bound, followed by the application of one-sided Hoeffding's inequality (steps are similar to the proof of Lemma \ref{lemma1} and \ref{lemma3}) to get,
{\begin{align}
\label{eq:probBound}
&\text{Pr}\{\eqref{eq:cond1}\hspace{0.1cm}\text{or}\hspace{0.1cm} \eqref{eq:cond2}\hspace{0.1cm} \text{or}
\hspace{0.1cm} \eqref{eq:cond3}\hspace{0.1cm} \text{or}\hspace{0.1cm}  \eqref{eq:cond4}\hspace{0.1cm} \text{is}\hspace{0.1cm} \text{True}\}\nonumber\\
&\leq \text{Pr}\{\eqref{eq:cond1}\hspace{0.1cm} \text{is}\hspace{0.1cm} \text{True}\}+\text{Pr}\{\eqref{eq:cond2}\hspace{0.1cm} \text{is}\hspace{0.1cm} \text{True}\}+\text{Pr}\{\eqref{eq:cond3}\hspace{0.1cm} \text{is}\hspace{0.1cm} \text{True}\}\nonumber\\
&\hspace{0.5cm}+\text{Pr}\{\eqref{eq:cond4}\hspace{0.1cm} \text{is}\hspace{0.1cm} \text{True}\}\},\nonumber\\
&\leq \frac{4}{(t-t_j^n)^{\alpha}}.
\end{align}}
Using \eqref{eq:S_k_f+S} and \eqref{eq:probBound}, we have
{\begin{align*}
&\mathbb{E}[\tilde{S}_k (n)]\leq \min\left\{W_J,V_J\right\}
+\sum\limits_{j=1}^{J}\bigg(1+\sum\limits_{t=t_j^n+K}^{jn} \frac{4}{(t-t_j^n)^{\alpha}}\bigg),\\
&\leq \min\left\{W_J,V_J\right\}
+\sum\limits_{j=1}^{J}\bigg(1+\int\limits_{s=(j-1)n+K}^{\infty} \frac{4}{(s-t_j^n)^{\alpha}}ds\bigg),\\
&= \min\left\{W_J,V_J\right\}+ \sum\limits_{j=1}^{J}\bigg(1+\frac{4(K-1)^{1-\alpha}}{\alpha-1}\bigg),
\end{align*}}
{\begin{align*}
&\leq \min\left\{W_J,V_J\right\}+ \sum\limits_{j=1}^{J}\bigg(1+\frac{4}{\alpha-1}\bigg),\\
&\leq \min\left\{W_J,V_J\right\}+ J\frac{\alpha+3}{\alpha-1}.
\end{align*}}
Hence the theorem follows.

\section*{ACKNOWLEDGMENT}




\bibliographystyle{ieeetr}
\bibliography{reference}

\end{document}